\newtheorem{theorem}{Theorem}[section]
\newtheorem{lemma}[theorem]{Lemma}
\theoremstyle{definition}
\theoremstyle{remark}
\newtheorem{remark}[theorem]{Remark}
\numberwithin{equation}{section}
\begin{document}

\title{Polygon Matching and Indexing Under Affine Transformations\footnote{This work was partially supported by  CIC-UMSNH and CONACyT grants}}



\author{Edgar Ch\'avez$^\dagger$ \and Ana C. Ch\'avez-C\'aliz$^\ddagger$ \and Jorge L. L\'opez-L\'opez $^\ddagger$
\\ {\small $\dagger$Instituto de Matemáticas, UNAM, México}
\\ {\small $\ddagger$Facultad de Ciencias F\'isico-Matem\'aticas, UMSNH, México }
\\  {\tt elchavez@matem.unam.mx acchavez@fismat.umich.mx jllopez@umich.mx}  }



\maketitle

\begin{abstract}
Given a collection $\{Z_1,Z_2,\ldots,Z_m\}$ of $n$-sided polygons in the plane and a query polygon $W$ we give algorithms
to find all $Z_\ell$ such that $W=f(Z_\ell)$ with $f$ an unknown similarity transformation in time independent
of the size of the collection. If $f$ is a known affine transformation, we show how to find all $Z_\ell$ such that $W=f(Z_\ell)$ in $O(n+\log(m))$
time. 

For a pair $W,W^\prime$ of polygons we can find all the pairs $Z_\ell,Z_{\ell^\prime}$ such that $W=f(Z_\ell)$ and $W^\prime=f(Z_{\ell^\prime})$
for an unknown affine transformation $f$ in $O(m+n)$ time.

For the case of triangles we also give bounds for the problem of matching triangles with variable vertices, which is equivalent to affine matching triangles in noisy conditions.
\end{abstract}

\section{Introduction}\label{intro}

The problem of matching point sets under similarities and affine transformations has been addressed by matching sets of triangles, extracted in a consisting way, from the point sets \cite{psurvey}. In this work we propose a technique to matching arbitrary polygons under affine transforms, and as a particular case triangles in noisy conditions.

We will identify points $(x,y)$ in the plane  with the corresponding complex numbers $z=x+iy$.
A polygon in the plane will be an ordered set of points, or complex numbers. Since the set
is arbitrary, self intersection and crossings are allowed. 

An  affine transformation $f:\mathbb R^2\to\mathbb R^2$ can be (uniquely) written as
$$\label{eq:aff-rc}f(z)=\alpha z+\beta\overline{z}+\gamma$$ where $\alpha,\beta,\gamma\in\mathbb C$ and $|\alpha|^2-|\beta|^2=\det f\ne0$. Here $\overline{z}$ stands for the complex conjugated of $z$.

Given polygons $Z=(z_1,z_2,\ldots,z_n)$ and $W=(w_1,w_2,\ldots,w_n)$ with $z_\ell$ and $w_\ell$ complex numbers, the problem consist in determining if
there exists an affine transformation $f$ such that $Z=f(W)$. Since the affine transformation have three complex parameters, it is enough to find
two corresponding triples of consecutive points in both polygons. A na\"{\i}ve procedure will be to fix a triple in $Z$ and try all the shifts in $W$ to find the correspondence. This takes $O(n)$ operations.

Now assume we have a given collection of polygons $Z_1,Z_2,\ldots,Z_m$ and a query polygon $W$, and want to know which of the $Z_\ell$ are affine images of $W$. Using a sequential approach and the simple procedure above the solution can be found in $O(m n)$ operations.
In general, without an index, the complexity will depend linearly on the number of polygons in the collection; multiplied by the complexity of an individual match. We will show how to improve this complexity using invariants.


\section{Invariants}

We split the invariant in two cases because  similarity transformations (rotations, translations, scaling and compositions between them) have relevance per se.

\subsection{A Similarity Invariant for Polygons}
\label{complex}

Let $j$ and $n$ be integers, with $n\geq3$, and let $\mathfrak{p}:\{1,2,\ldots,n\}\to\{1,2,\ldots,n\}$ be a permutation. Consider functions $\varphi_{n,\mathfrak{p}},\,\varphi_{n,j}:\mathbb C^n\to\mathbb C\cup\{\infty\}$ given by 

$$\begin{array}{c}
\varphi_{\mathfrak{p}}(z_1,\ldots,z_n)=\dfrac{\sum_{k=1}^n\lambda_n^{\mathfrak{p}(k)}z_k}{\sum_{k=1}^n\lambda_n^{-\mathfrak{p}(k)}z_k},\\\\
\varphi_{n,j}(z_1,\ldots,z_n)=\dfrac{\sum_{k=1}^n\lambda_n^{jk}z_k}{\sum_{k=1}^n\lambda_n^{-jk}z_k}
\end{array}$$
where $\lambda_n=e^{2\pi i/n}$.

The function $\varphi_{\mathfrak p}$ is well-defined except on $$\mathcal N=\{(z_1,\ldots,z_n)\in\mathbb C^n\,:\,\sum_{k=1}^n\lambda^{\mathfrak{p}(k)}_nz_k=0=\sum_{k=1}^n\lambda_n^{-\mathfrak{p}(k)}z_k\},$$ but $\mathcal N$ is a $n-2$ dimensional vector subspace with null measure in $\mathbb C^n$. Similarly, $\varphi_{n,j}$ is well-defined except on a $n-2$ dimensional vector subspace.

Notice that $\varphi_{\mathfrak{p}}$ is invariant under the action of similarity transformations on polygons with $n$ vertices: $$\begin{array}{c}\varphi_{\mathfrak{p}}(\alpha z_1+\beta,\alpha z_2+\beta,\ldots,\alpha z_n+\beta)=\dfrac{\sum_{k=1}^n\lambda_n^{\mathfrak{p}(k)}(\alpha z_k+\beta)}{\sum_{k=1}^{n}\lambda_n^{-\mathfrak{p}(k)}(\alpha z_k+\beta)}\\=\dfrac{\alpha\sum_{k=1}^n\lambda_n^{\mathfrak{p}(k)}z_k+\beta\sum_{k=1}^n\lambda^{\mathfrak{p}(k)}_n}{\alpha\sum_{k=1}^{\mathfrak{p}}\lambda_n^{-\mathfrak{p}(k)}z_k+\beta\sum_{k=1}^n\lambda^{-\mathfrak{p}(k)}_n}=\varphi_{\mathfrak{p}}(z_1,z_2,\ldots,z_n).\end{array}$$

It is also clear that $\varphi_{n,j}$ is invariant under the action of similarity transformations:
$$\varphi_{n,j}(\alpha z_1+\beta,\alpha z_2+\beta,\ldots,\alpha z_n+\beta)=\varphi_{n,j}(z_1,z_2,\ldots,z_n).$$

\subsection{An Affine Invariant for Polygons} 
\label{real}

In this section, $\varphi$ denotes either $\varphi_{n,\mathfrak{p}}$ or $\varphi_{n,j}$.
Let $\zeta=\varphi(z_1,z_2,\ldots,z_n)$ and $\xi=\varphi(f(z_1),f(z_2),\ldots,f(z_n))$, where $f$ is an affine transformation. By making $x=\sum_{k=1}^n\lambda_n^{\mathfrak{p}(k)}z_k$ and $y=\sum_{k=1}^n\lambda_n^{-\mathfrak{p}(k)}z_k$ if $\varphi=\varphi_{\mathfrak{p}}$, or by making $x=\sum_{k=1}^n\lambda_n^{jk}z_k$ and $y=\sum_{k=1}^n\lambda_n^{-jk}z_k$ if $\varphi=\varphi_{n,j}$, we obtain \begin{equation}\label{quasi-conf}\dfrac{|\xi-\zeta|}{|1-\overline{\zeta}\xi|}=\dfrac{\left|\dfrac{\alpha x+\beta\overline{y}}{\alpha y+\beta\overline{x}}-\dfrac{x}{y}\right|}{\left|1-\dfrac{\overline{x}}{\overline{y}}\dfrac{\alpha x+\beta\overline{y}}{\alpha y+\beta\overline{x}}\right|}=\left|\dfrac{\beta}{\alpha}\right|.\end{equation} That is, the number $\dfrac{|\xi-\zeta|}{|1-\overline{\zeta}\xi|}$ does not depend neither on the $n$-agon $(z_1,\ldots,z_n)$ nor on the numbers $n$ and $j$, nor on the permutation $\mathfrak p$; it just depends on $f$.

Please notice that similarity invariance in section \ref{complex} can be obtained from the above taking $\beta=0$. 

\begin{remark}
The numerator and denominator involved in the definition of $\varphi_{n,j}$ are coefficients of the polygon $(z_1,\ldots,z_n)$ appearing when it has expressed in certain basis of $\mathbb C^n$, namely the basis of star-shaped polygons (see \cite{finite-fourier}, \cite[proof of Proposition 3]{yo}).
\end{remark}

\begin{remark}
Equality \eqref{quasi-conf} is analogous to a well-known result from Teichm\"uller theory: if we deform the complex structure $[T]$ of a torus by an affine transformation $f$, then the Teichm\"uller distance between $[T]$ and $[fT]$ does not depend of $T$, it just depends on $f$ (see \cite[chapter V.6]{lehto} for instance).
\end{remark}

\subsection{Shifts}\label{shift}

A polygon can be enumerated in $n$ different cyclic ways or shifts. Notice that $\varphi_{n,j}(z_1,z_2,\ldots,z_n)=\lambda_n^{2j}\varphi_{n,j}(z_2,z_3,\ldots,z_n,z_1)$, hence we have the equality of potencies
\begin{equation}\label{eq:shift}\varphi_{n,j}(z_1,z_2,\ldots,z_n)^n=\varphi_{n,j}(z_\ell,z_{\ell+1},\ldots,z_{\ell+n-1},z_{\ell+n})^n\end{equation} for any $\ell$, where the subscripts are taken mod $n$.

\begin{remark}
In what follows we use functions of the form $\varphi_{n,j}$ because $\varphi_{\mathfrak{p}}$ does not satisfies an identity of the form \eqref{eq:shift} for a general permutation $\mathfrak p$.
Each one of the instances of $\varphi_{n,j}$, where $j$ ranges in $\{1,2\,\ldots,n-1\}$, can be used in the below algorithms for matching polygons. Since they will be used as hashing functions, collisions can be avoided by using multiple $\varphi_{n,j}$ instances. 
\end{remark}

\subsection{An Index for Matching Polygons}
\label{single}

Assume that a collection of different polygons $Z_1,Z_2,\ldots,Z_m$ of $n$ edges is given. By a preprocessing step we compute pairs $(\ell, \varphi_{n,j}(Z_\ell)^n)$. Assume that a query polygon $W$ is given and that the objective is to find all the polygons in the collection such that $W=f(Z_\ell)$ for some unknown similarity transformation $f$.
Using the result from section \ref{shift}, this corresponds to all the polygons such that $\varphi_{n,j}(Z_\ell)^n=\varphi_{n,j}(W)^n$, assume there are $R$ of them.
They can be found in $O(n+R)$ operations. Please notice that this search can be implemented using hashing in time independent of the size of the collection.

If the parameters of the  affine transformation are known, then all the matching polygons such that $W=f(Z_\ell)$ can be easily found.
Using the result from section \ref{real} we compute $\zeta=\varphi_{n,j}(W)$. If the polygons match, then $\dfrac{|\xi_\ell-\zeta|}{|1-\overline{\zeta}\xi_\ell|}=\left|\dfrac{\beta}{\alpha}\right|$, where $\xi_\ell=\varphi_{n,j}(Z_\ell)$. 
We can retrieve candidate objects in sublinear time using a spatial access method (e.g. kd-trees), or a metric index \cite{msurvey}. The corresponding $R$ candidate objects in the complex plane will be arranged as a circle surrounding the query polygon $W$. Using an index, e.g. kd-trees, the complexity bound of the search will be $O(n+\log(m)+R\tau)$ with $n$ the number of edges of the polygon, and $m$ the number of polygons in the collection and $\tau$ the cost of checking the matching between a pair of polygons. 

\subsection{An Index for Matching Pairs of Polygons}
\label{pair}
If the  affine transformation is unknown, the search space cannot be bounded using invariant \ref{real}. However, the problem can be solved if we have a pair of polygons. Let $W$ and $W^\prime$ be two $n$-sided query polygons, we want to obtain all polygons $Z_\ell,Z_{\ell^\prime}$ such that $W=f(Z_\ell)$ and $W^\prime=f(Z_{\ell^\prime})$ for some unknown  affine transformation $f$. We compute $\zeta=\varphi_{n,j}(W)$ and $\zeta^\prime=\varphi_{n,j}(W^\prime)$. Let $\eta_\ell=\dfrac{|\xi_\ell-\zeta|}{|1-\overline{\zeta}\xi_\ell|}$ and $\eta^\prime_\ell=\dfrac{|\xi_\ell-\zeta^\prime|}{|1-\overline{\zeta^\prime}\xi_\ell|}$
with $\xi_\ell=\varphi_{n,j}(Z_\ell)$. Abusing the notation, let $\{Z_\ell \}=\{\eta_\ell\}$ and $\{Z_\ell^{\prime}\}=\{\eta^\prime_\ell\}$. Using the result from section \ref{real}, what we need is the intersection of the two sets above $\{\eta_\ell\} \cap \{\eta^\prime_\ell\}$. The polygons in the intersection are candidates for matching. The algorithm above needs $O(n+m)$ operations. To finish the procedure, $O(R)$ match verifications of the $R$ polygons in the intersection are needed.

\subsection{Collisions}

For two unrelated polygons $W_i,W_\ell$ it is possible that they collide, i.e. $\varphi_{n,j}(W_i)^n=\varphi_{n,j}(W_\ell)^n$ without being an affine related. This increase the $R$ candidate objects to be reviewed in algorithms above. To decrease the number of candidates we can use multiple instances of $\varphi_{n,j}$. Let $\varphi_{n,1},\varphi_{n,2},\ldots,\varphi_{n,n-1}$ be the collection of instances of the affine invariant functions,  and $L_1,L_2, \ldots,L_{n-1}$ the respective lists of candidates. The true matches should be in the intersection. Notice that it is not necessary to use all the $\varphi_{n,j}$ functions, only a subset of them.

\section{Noisy Polygons}

A slightly more general setup is when there is an unknown noise function in the matching. The image of the query polygon is an  affine transformation plus noise, namely $W=(f(z_1+\Delta z_1),f(z_2+\Delta z_2),\ldots,f(z_n+\Delta z_n))$. 
We have bounded the difference for the case of triangles, this is explained below. We were not able to give a tight bound for general polygons. To simplify notation, $\lambda_3=e^{2\pi i / 3}$ and $\varphi_{3,1}$ will be denoted respectively by $\lambda$ and $\varphi$ in what follows.

\subsection{Triangles with Variable Vertex}
For the similarity class of a triangle $(z_1,z_2,z_3)$ we can choose a representative of the form $(0,1,\tau)$. The complex number $\tau=\tau(z_1,z_2,z_3)$ is determined as $\tau=(z_3-z_1)/(z_2-z_1)$. Let us define $M(\tau)=(\lambda^2+\tau)/(\lambda+\tau)$. This is the unique conformal biyection $M:\mathbb C\cup\{\infty\}\to\mathbb C\cup\{\infty\}$ such that \begin{equation}\label{cambio}\varphi(z_1,z_2,z_3)=M(\tau(z_1,z_2,z_3)).\end{equation} The order 3 `rotation'  $R(\tau)=1/(1-\tau)$ of $\mathbb C\cup\{\infty\}$, with fixed point $-\lambda^2$, corresponds to cyclic relabelling of the vertices of a triangle, because the triangles $(0,1,\tau)$, $(R(\tau),0,1)$ and $(1,R^2(\tau),0)$ are similar. We use the term `rotation' because $M\circ R\circ M^{-1}(\xi)=\lambda\xi$ is a rotation by angle $2\pi/3$.

We shall proof an estimation for perturbations of an equilateral triangle.
For a real number $0<r<\sqrt{3}/6$ we consider the polydisc \begin{equation}\label{ur}U_r=\{(z_1,z_2,z_3)\in\mathbb C^3\,:\,|z_1-\lambda|<r,|z_2-\lambda^2|<r,|z_3-1|<r\}.\end{equation} Triangles $(z_1,z_2,z_3)\in U_r$ can be seen as perturbations of the equilateral triangle $(\lambda,\lambda^2,1)$.

\begin{lemma}\label{lem:equi}
$$\begin{array}{c}|\varphi(z_1,z_2,z_3)|\leq\sqrt{\dfrac{9-4\sqrt{3}r+12r^2-(3+2\sqrt{3}r)\sqrt{9-20\sqrt{3}r+12r^2}}{9-4\sqrt{3}r+12r^2+(3+2\sqrt{3}r)\sqrt{9-20\sqrt{3}r+12r^2}}}\\=\dfrac{8}{9}\sqrt{3}r+\dfrac{32}{27}r^2+O(r^3)\end{array}$$ for any $(z_1,z_2,z_3)\in U_r$.
\end{lemma}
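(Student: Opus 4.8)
The plan is to reduce the problem to a maximization of $|\varphi|$ over the closure of the polydisc $U_r$ and then to exploit the Möbius/conformal structure of $\varphi$ recorded in \eqref{cambio}. First I would recall that $\varphi(z_1,z_2,z_3)=M(\tau(z_1,z_2,z_3))$ with $M(\tau)=(\lambda^2+\tau)/(\lambda+\tau)$, and that $M$ maps circles/lines to circles/lines. Since $U_r$ is compact and $\varphi$ is continuous there (one checks $r<\sqrt 3/6$ keeps us away from the pole $\sum\lambda^{-k}z_k=0$, i.e. away from $\tau=-\lambda$), the supremum is attained. The key reduction is that $|\varphi|$ is a nonconstant harmonic-type function built from holomorphic/antiholomorphic pieces, so its maximum over the polydisc is attained on the distinguished boundary $|z_1-\lambda|=|z_2-\lambda^2|=|z_3-1|=r$; more concretely, writing $z_1=\lambda+re^{i\theta_1}$, etc., the numerator $N=\sum\lambda^{k}z_k$ and denominator $D=\sum\lambda^{-k}z_k$ are affine in the $re^{i\theta_j}$, and at the equilateral center $N=\sum\lambda^{2k}\lambda^{\,?}$ — in fact $N(\lambda,\lambda^2,1)=0$ while $D(\lambda,\lambda^2,1)=\lambda\cdot\lambda+\lambda^{-2}\lambda^2+\lambda^{-3}=3$, so $|\varphi|$ is small, of order $r$, and we must maximize $|N|/|D|$ with $|N|\le \text{(linear in }r)$ and $|D|\ge 3-\text{(linear in }r)$.

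Next I would make the extremal configuration explicit. Because $N$ is a linear form $\sum_k \lambda^{k} r e^{i\theta_k}$ in the perturbations, $|N|$ is maximized by aligning all three terms $\lambda^{k}e^{i\theta_k}$ to a common phase, giving $\max|N|=3r$ on the torus; simultaneously one wants $|D|=|3+\sum_k\lambda^{-k}re^{i\theta_k}|$ as small as possible, i.e. the perturbation vector in $D$ pointing opposite to $3$. These two demands are linked through the same $\theta_k$, so the genuine optimization is a single real trigonometric extremal problem: maximize $|3+r\sum_k \lambda^{k}e^{i\theta_k}\cdot(\text{unit})|$-type expressions subject to the coupling. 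I expect the clean way is to parametrize by $\tau$ instead: the image $\tau(U_r)$ is (close to) a round disc in the $\tau$-plane centered near $\tau=-\lambda^2$ (the equilateral point, fixed point of $R$), compute its center and radius to the needed order — or exactly, since a disc's image under the Möbius map $M$ is again a disc — and then $\max_{U_r}|\varphi|$ is the larger of the two distances from $M(\text{disc})$'s boundary to the origin, which is a closed-form expression in the disc's Euclidean center and radius. Carrying the exact disc data through $M$ and simplifying the modulus of the near and far points of the image circle to the origin should produce exactly the radicand displayed in the statement; the Taylor expansion $\tfrac{8}{9}\sqrt3\,r+\tfrac{32}{27}r^2+O(r^3)$ is then a routine series expansion of that radical.

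The main obstacle I anticipate is twofold. First, controlling the image of the polydisc $U_r$ under $\tau=(z_3-z_1)/(z_2-z_1)$: this is a rational function of three disc variables, not a Möbius map, so its image is not exactly a disc, and I will need a careful estimate showing the image is contained in (and, for sharpness, essentially fills out) an explicit disc whose center and radius agree with the true extremum to all orders relevant to an exact bound — this likely forces working directly with $N$ and $D$ rather than with $\tau$, and then the reduction-to-distinguished-boundary argument must be made rigorous (e.g. via the maximum principle applied variable-by-variable to the holomorphic map $(z_1,z_2,z_3)\mapsto N/D$, valid since $D\ne 0$ on $\overline{U_r}$). Second, once the extremal $\theta_k$ are identified, verifying that the resulting value of $|N|^2/|D|^2$ simplifies to the stated ratio $\frac{9-4\sqrt3 r+12r^2-(3+2\sqrt3 r)\sqrt{9-20\sqrt3 r+12r^2}}{9-4\sqrt3 r+12r^2+(3+2\sqrt3 r)\sqrt{9-20\sqrt3 r+12r^2}}$ is a nontrivial but purely algebraic computation; I would organize it by computing $|N|^2$, $|D|^2$, and the cross term $\mathrm{Re}(N\overline D)$ as the extremal parameter varies, recognizing the inner square root as the discriminant of the quadratic governing the stationary points, and then confirming the leading terms match the claimed expansion as a sanity check.
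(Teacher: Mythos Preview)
Your overall strategy—pass to the $\tau$-plane and exploit the M\"obius structure of $M$—is the right one, but the proposal stalls precisely at the step you yourself flag as the main obstacle: controlling the image $\tau(U_r)$. Your fallback, a direct torus optimization of $|N|/|D|$ with $N=r(\lambda w_1+\lambda^2 w_2+w_3)$ and $D=3+r(\lambda^2 w_1+\lambda w_2+w_3)$ for $|w_k|=1$, is a genuinely coupled three-parameter problem, and nothing in your outline explains why the answer should organize itself into the stated radical; the ``recognize the discriminant'' step is a hope, not a mechanism.

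The paper supplies the missing structural input geometrically rather than analytically. Instead of computing $\tau(U_r)$, it overbounds it: from $|z_j-z_k|\in(\sqrt3-2r,\sqrt3+2r)$ one gets two-sided bounds on each ratio $|z_j-z_k|/|z_j-z_l|$, and by Apollonius' theorem each such ratio constraint confines $\tau$ between two explicit circles in the upper half-plane. Intersecting these constraints (and their images under the order-$3$ map $R(\tau)=1/(1-\tau)$, which corresponds to cyclic relabelling of vertices) yields a curvilinear hexagon about $-\lambda^2$ invariant under $R$. The decisive point your outline lacks is the use of the relation $M\circ R\circ M^{-1}(\xi)=\lambda\xi$: because $M$ conjugates $R$ to a rotation about $0$, the circle through the three hexagon vertices $\zeta,\,R(\zeta),\,R^2(\zeta)$ is sent by $M$ to a circle \emph{centered at the origin}, of radius $|M(\zeta)|$. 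With the explicit vertex
\[
\zeta=\tfrac{1}{2}+i\,\tfrac{\sqrt{9-20\sqrt3\,r+12r^2}}{2(\sqrt3+2r)},
\]
computing $|M(\zeta)|$ is a single modulus evaluation and gives the stated radical directly; the Taylor expansion is then routine. Without the Apollonius/edge-ratio bound and the $R$-equivariance of $M$, there is no natural route to the closed form, and your ``second obstacle'' remains unresolved.
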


Here $O(r^3)$ stands for the product of $r^3$ and a convergent power series in $r$.

\begin{proof}
First we claim that if we see triangles $(z_1,z_2,z_3)\in U_r$ as triangles in the form $(0,1,\tau)$, then $\tau$ lies in the non-shaded region showed in fig. \ref{fig:2}, which will be described below. To proof the claim, note that the length of the longest edge of a triangle $(z_1,z_2,z_3)$ with vertices in $U_r$ is at most $\sqrt{3}+2r$, and the length of the shortest edge is at least $\sqrt{3}-2r$. It follows that \begin{equation}\label{eq:apol}\dfrac{\sqrt{3}-2r}{\sqrt{3}+2r}<\dfrac{|z_j-z_k|}{|z_j-z_l|}<\dfrac{\sqrt{3}+2r}{\sqrt{3}-2r}\end{equation} with $\{j,k,l\}=\{1,2,3\}$. Apollonius' theorem says that if $K>0$, then the locus $C_K=\{\tau\in\mathbb H\,:\,|\tau-1|=K|\tau|\}$ is an euclidean circle centered at the real axis. Since $(\lambda,\lambda^2,1)$ is a positive oriented triangle, we have that the imaginary part of $\tau$ is positive. Inequalities \eqref{eq:apol} imply that $\tau$ can not lay in the shaded regions of fig. \ref{fig:1}, and $\tau$ can not lay in the rotations by $R$ and $R^2$ of the shaded regions. Hence $\tau$ should lay in a curvilinear hexagon `centered' at $-\lambda^2$ (see fig. \ref{fig:2}). The claim is proved.
\begin{figure}
  \includegraphics[scale=0.9]{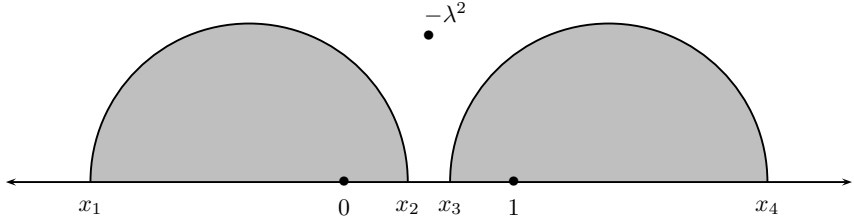}
\caption{The Apollonian circles $C_{K_1}$ and $C_{K_2}$, where $K_1=(\sqrt{3}-2r)/(\sqrt{3}+2r)$ and $K_2=1/K_1$, which have respectively diameter $x_1x_2$ and $x_3x_4$ with $x_1=\dfrac{2r-\sqrt{3}}{4r},\,x_2=\dfrac{\sqrt{3}-2r}{2\sqrt{3}},\,x_3=\dfrac{\sqrt{3}+2r}{2\sqrt{3}},\,x_4=\dfrac{\sqrt{3}+2r}{4r}$.}
\label{fig:1}
\end{figure}

\begin{figure}
  \includegraphics[scale=0.9]{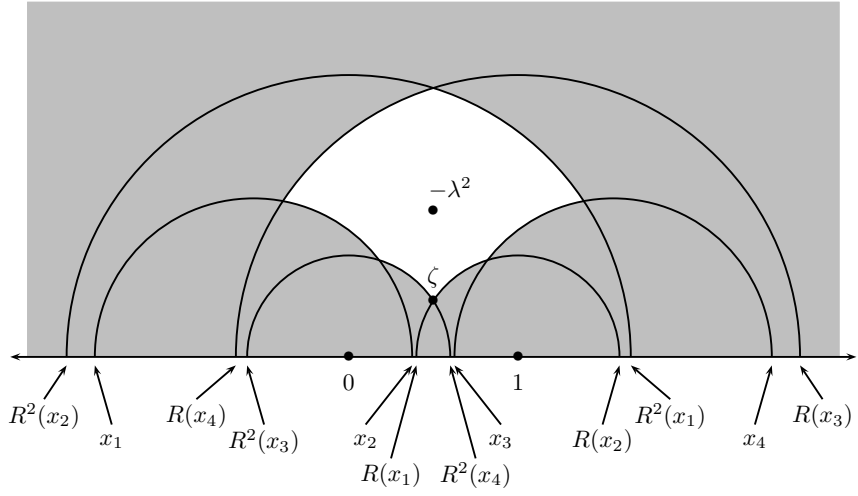}
\caption{Curvilinear hexagon centered at $-\lambda^2$ with a symmetry group $\{I,R,R^2\}$.}
\label{fig:2}
\end{figure}

To conclude the proof of the lemma, consider the vertex $$\zeta=\dfrac{1}{2}+i\dfrac{\sqrt{9-20\sqrt{3}r+12r^2}}{2(\sqrt{3}+2r)}$$ showed in fig. \ref{fig:2}. The points $R(\zeta),\,R^2(\zeta)$ are also vertices of the curvilinear hexagon. The transformationation $M$ maps the circle through $\zeta,\,R(\zeta),\,R^2(\zeta)$ onto a circle of radius $$\big|M(\zeta)\big|=\sqrt{\dfrac{9-4\sqrt{3}r+12r^2-(3+2\sqrt{3}r)\sqrt{9-20\sqrt{3}r+12r^2}}{9-4\sqrt{3}r+12r^2+(3+2\sqrt{3}r)\sqrt{9-20\sqrt{3}r+12r^2}}}$$ centered at the origin.

We have $\lim_{r\to\sqrt{3}/6^-}\zeta=1/2\in\mathbb R$, and $(0,1,\tau)$ with $\tau\in\mathbb R$ is a degenerated triangle.

The first two coefficients of the Taylor series result from a straightforward calculation.
\end{proof}

Now we deal with perturbations of a general triangle $(z_1,z_2,z_3)$.

Notice that the unique  affine transformation $f(z)=\alpha z+\beta\overline{z}+\gamma$ that takes the equilateral triangle $(\lambda,\lambda^2,1)$ to a non-degenerated triangle $(z_1,z_2,z_3)$ is \begin{equation}\label{eq:equi}\dfrac{\lambda^2 z_1+\lambda z_2+z_3}{3}z+\dfrac{\lambda z_1+\lambda^2 z_2+z_3}{3}\overline{z}+\dfrac{z_1+z_2+z_3}{3}.\end{equation} In order to give a geometric interpretation to the affine transformation $f(z)=\alpha z+\beta\overline{z}+\gamma$, we set $\alpha=|\alpha|e^{i\phi_1}$, $\beta=|\beta|e^{i\phi_2}$ and $z=re^{i\theta}$. Then $\alpha z+\beta\overline{z}+\gamma$ is equal to
$$re^{i(\phi_1+\phi_2)/2}\left[\left(|\alpha|+|\beta|\right)\cos\left(\frac{\phi_1-\phi_2}{2}+\theta\right)+i\left(|\alpha|-|\beta|\right)\sin\left(\frac{\phi_1-\phi_2}{2}+\theta\right)\right]+\gamma.$$ Therefore $f$ maps a circle of radius $r$ onto an ellipse with major axis of length $2r(|\alpha|+|\beta|)$ inclined at angle $(\phi_1+\phi_2)/2$ with the real axis, and minor axis of length $2r\big||\alpha|-|\beta|\big|$. It is convenient introduce $E_z(\rho_1,\rho_2,\theta)$ to denotes the open (i.e. without boundary) ellipse with major axis of length $\rho_1$ inclined at angle $\theta$ with the real axis, minor axis of length $\rho_2$, and centered at the point $z$. The proof of the next result is immediate from the above geometric interpretation for  affine transformations.

\begin{lemma}
Given $0<r<\sqrt{3}/6$ and a non-degenerated triangle, we set $$ae^{i\phi_1}=\dfrac{\lambda^2z_1+\lambda z_2+z_3}{3},\qquad be^{i\phi_2}=\dfrac{\lambda z_1+\lambda^2z_2+z_3}{3},$$ with $a,b>0$. Let $V=\{(w_1,w_2,w_3)\in\mathbb C^3\,:\,w_j\in E_{z_j}(\rho_1,\rho_2,\theta)\text{ for }j=1,2,3\}$ where $\rho_1=2r(a+b),\,\rho_2=2r|a-b|,\,\theta=(\phi_1+\phi_2)/2$. Then the  affine transformation such that takes $(\lambda,\lambda^2,1)$ to $(z_1,z_2,z_3)$ maps triangles with vertices in $U_r$ (see eq. \eqref{ur}) onto triangles with vertices in $V$.
\end{lemma}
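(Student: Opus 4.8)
The plan is to read everything off from the explicit formula \eqref{eq:equi} together with the ellipse picture established in the paragraph just above the statement. Let $f(z)=\alpha z+\beta\overline{z}+\gamma$ be the unique affine map taking the equilateral triangle $(\lambda,\lambda^2,1)$ to $(z_1,z_2,z_3)$; by \eqref{eq:equi}, $\alpha=(\lambda^2z_1+\lambda z_2+z_3)/3=ae^{i\phi_1}$, $\beta=(\lambda z_1+\lambda^2z_2+z_3)/3=be^{i\phi_2}$ and $\gamma=(z_1+z_2+z_3)/3$, so $|\alpha|=a$ and $|\beta|=b$. Hence the quantities defining $V$ are exactly the data of the ellipse onto which $f$ sends a circle of radius $r$: $\rho_1=2r(a+b)=2r(|\alpha|+|\beta|)$, $\rho_2=2r|a-b|=2r\big||\alpha|-|\beta|\big|$, and $\theta=(\phi_1+\phi_2)/2$ is the inclination angle. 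I would also record that $f$ is a bijection of $\mathbb C$, because the target triangle is non-degenerated so $\det f=|\alpha|^2-|\beta|^2=a^2-b^2\neq0$, and that by construction $f(\lambda)=z_1$, $f(\lambda^2)=z_2$, $f(1)=z_3$.

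Next I would upgrade the geometric interpretation from circles to discs. The computation preceding the lemma shows that, for a circle $z=p+\varrho e^{i\psi}$ of any radius $0\le\varrho\le r$ centred at an arbitrary point $p$, the image $f\big(p+\varrho e^{i\psi}\big)$ traces (as $\psi$ varies) the boundary of $E_{f(p)}\big(\tfrac{\varrho}{r}\rho_1,\tfrac{\varrho}{r}\rho_2,\theta\big)$ — the same ellipse scaled by $\varrho/r$, with the same centre and inclination. Letting $\varrho$ range over $[0,r)$ foliates the open disc of radius $r$ about $p$ by these circles and its image by the corresponding concentric, similarly inclined ellipses, so $f$ maps this open disc bijectively onto the open elliptical region $E_{f(p)}(\rho_1,\rho_2,\theta)$.

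Finally I would assemble the statement. A triangle with vertices in $U_r$ is a triple $(w_1,w_2,w_3)$ with $|w_1-\lambda|<r$, $|w_2-\lambda^2|<r$, $|w_3-1|<r$, i.e. $w_j$ lies in the open disc of radius $r$ about $\lambda,\lambda^2,1$ respectively. Applying $f$ and using $f(\lambda)=z_1,\ f(\lambda^2)=z_2,\ f(1)=z_3$ together with the previous paragraph gives $f(w_1)\in E_{z_1}(\rho_1,\rho_2,\theta)$, $f(w_2)\in E_{z_2}(\rho_1,\rho_2,\theta)$, $f(w_3)\in E_{z_3}(\rho_1,\rho_2,\theta)$, hence $(f(w_1),f(w_2),f(w_3))\in V$; since $f$ restricts to a bijection of each vertex disc onto the corresponding ellipse, the correspondence is in fact onto $V$, and since affine maps carry triangles to triangles the image is genuinely a triangle. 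I do not expect a real obstacle here — the hypothesis $0<r<\sqrt3/6$ is only inherited from the definition of $U_r$ and is not otherwise used — the one point deserving care is that $f$ sends the full disc (not merely its bounding circle) onto the full elliptical region, which is why I make the foliation-by-radii argument of the middle paragraph explicit rather than just invoking the circle-to-ellipse statement verbatim.
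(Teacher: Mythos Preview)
Your proposal is correct and follows exactly the approach the paper takes: the paper merely declares that the lemma is ``immediate from the above geometric interpretation for affine transformations,'' and you have simply spelled out that immediacy by identifying $\alpha=ae^{i\phi_1}$, $\beta=be^{i\phi_2}$ via \eqref{eq:equi} and then invoking the circle-to-ellipse computation. Your foliation-by-radii step to pass from circles to open discs is a welcome bit of care that the paper leaves implicit.
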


By the proof of lemma \ref{lem:equi}, triangles $(z_1,z_2,z_3)\in U_r$ are similarity equivalent to triangles $(0,1,\tau)$ with $\tau$ in a region inside the circle $\mathcal C$ through $\zeta,\,R(\zeta),\,R^2(\zeta)$, where $$\zeta=\dfrac{1}{2}+i\dfrac{\sqrt{9-20\sqrt{3}r+12r^2}}{2(\sqrt{3}+2r)},$$ and this circle is mapped by $M$ to a circle centered at the origin. Circles centered at the origin are preserved under multiplication by $-1$, therefore $\mathcal C$ is preserved under $M^{-1}(-M(\tau))=(\tau-2)/(2\tau-1)$. Hence $\mathcal C$ is a circle with diameter between $\zeta$ and $$\xi=M^{-1}(-M(\zeta))=\dfrac{1}{2}+i\dfrac{3(\sqrt{3}+2r)}{2\sqrt{9-20\sqrt{3}r+12r^2}}.$$

The hypothesis in the next result are the same as in the previous lemma.

\begin{theorem}\label{teo}
Given $0<r<\sqrt{3}/6$ and a non-degenerated positively oriented triangle $(z_1,z_2,z_3)$, we set $$ae^{i\phi_1}=\dfrac{\lambda^2z_1+\lambda z_2+z_3}{3},\qquad be^{i\phi_2}=\dfrac{\lambda z_1+\lambda^2z_2+z_3}{3},
$$ with $a,b>0$. Let $V=\{(w_1,w_2,w_3)\in\mathbb C^3\,:\,w_j\in E_{z_j}(\rho_1,\rho_2,\theta)\text{ for }j=1,2,3\}$ where $\rho_1=2r(a+b),\,\rho_2=2r|a-b|,\,\theta=(\phi_1+\phi_2)/2$. Then the triangles with vertices in $V$ are similar to triangles $(0,1,\tau)$ with $\tau$ in a region inside of the ellipse $$E_{w}\left(\dfrac{2\sqrt{3}\rho(a+b)}{|z_2-z_1|},\dfrac{2\sqrt{3}\rho|a-b|}{|z_2-z_1|},\dfrac{\phi_1+\phi_2}{2}-\arg(z_2-z_1)\right)$$ where $$\begin{array}{c}w=\dfrac{1}{2}-\dfrac{\sqrt{3}(z_1+z_2-2z_3)(9+12r^2-4\sqrt{3}r)}{6(z_2-z_1)(\sqrt{3}+2r)\sqrt{9-20\sqrt{3}r+12r^2}},\\\rho=\dfrac{8\sqrt{3}r}{(\sqrt{3}+2r)\sqrt{9-20\sqrt{3}r+12r^2}}.\end{array}$$
\end{theorem}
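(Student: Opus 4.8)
\section*{Proof strategy}

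The plan is to reduce the statement to a single real-linear change of the triangle shape parameter $\tau$ and then to compute the image of the circle $\mathcal C$ under it.

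By the preceding lemma, the affine map $f(z)=\alpha z+\beta\overline z+\gamma$ of \eqref{eq:equi} carrying $(\lambda,\lambda^2,1)$ to $(z_1,z_2,z_3)$ (so $\alpha=ae^{i\phi_1}$, $\beta=be^{i\phi_2}$) sends triangles with vertices in $U_r$ onto triangles with vertices in $V$; thus every triangle with vertices in $V$ equals $f(S)$ for some triangle $S$ with vertices in $U_r$, and by the proof of Lemma \ref{lem:equi} such an $S$ is similar to $(0,1,t)$ with $t$ in the region bounded by $\mathcal C$. First I would pin down $\mathcal C$: since $\zeta\xi$ is a diameter and $\operatorname{Re}\zeta=\operatorname{Re}\xi=\tfrac12$, its centre is $\kappa=\tfrac12+i\,\tfrac{\operatorname{Im}\zeta+\operatorname{Im}\xi}{2}$ and its radius is $\tfrac12|\operatorname{Im}\xi-\operatorname{Im}\zeta|$; using the identity $3(\sqrt3+2r)^2-(9-20\sqrt3 r+12r^2)=32\sqrt3 r$ these simplify to
\[
\kappa=\frac12+i\,\frac{9-4\sqrt3 r+12r^2}{2(\sqrt3+2r)\sqrt{9-20\sqrt3 r+12r^2}},\qquad \rho=\frac{8\sqrt3 r}{(\sqrt3+2r)\sqrt{9-20\sqrt3 r+12r^2}},
\]
with $\rho$ as in the statement.

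Next I would track how $f$ moves shapes. Because $\tau$ is only a similarity invariant, $\tau(f(S))$ is not a function of $\tau(S)$ alone; but for $S$ with vertices in $U_r$ we may write $S=\sigma(0,1,t)$ with the similarity $\sigma$ lying near the fixed similarity $\sigma_0(z)=(\lambda^2-\lambda)z+\lambda$ that carries $(0,1,-\lambda^2)$ onto the equilateral $(\lambda,\lambda^2,1)$. A direct computation gives $f\circ\sigma_0(z)=-i\sqrt3\,(\alpha z-\beta\overline z)+\text{const}$, so $f(\sigma_0(0,1,t))$ is similar to $(0,1,\Psi(t))$ with
\[
\Psi(t)=\frac{\alpha t-\beta\overline t}{\alpha-\beta}=x+iy\,\frac{\alpha+\beta}{\alpha-\beta}\qquad(t=x+iy);
\]
one checks $\Psi$ fixes $0$ and $1$ and sends $-\lambda^2$ (the shape of $(\lambda,\lambda^2,1)$) to $\tau_0:=\dfrac{z_3-z_1}{z_2-z_1}$ (the shape of $(z_1,z_2,z_3)$), as it must since $f(\lambda,\lambda^2,1)=(z_1,z_2,z_3)$. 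Hence the shapes of the triangles with vertices in $V$ fill a region inside $\Psi(\mathcal C)$; being real-linear, $\Psi$ carries the circle $\mathcal C$ to the ellipse of centre $w:=\Psi(\kappa)$, axes of lengths $2\rho\,\dfrac{a+b}{|\alpha-\beta|}$ and $2\rho\,\dfrac{|a-b|}{|\alpha-\beta|}$, inclined at $\tfrac12(\arg\alpha+\arg\beta)-\arg(\alpha-\beta)$ (the $\pi/2$ produced by the minus sign in $\alpha e^{i\theta}-\beta e^{-i\theta}$ cancelling the $\pi/2$ inside $\arg(\alpha-\beta)$).

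It then remains to rewrite $w$, the axes and the inclination through $z_1,z_2,z_3$. From $\alpha=\tfrac13(\lambda^2z_1+\lambda z_2+z_3)$ and $\beta=\tfrac13(\lambda z_1+\lambda^2z_2+z_3)$ one gets $\alpha-\beta=\tfrac13(\lambda^2-\lambda)(z_1-z_2)$, so $|\alpha-\beta|=|z_2-z_1|/\sqrt3$ and $\arg(\alpha-\beta)=\tfrac\pi2+\arg(z_2-z_1)$; this turns the axis lengths into $\tfrac{2\sqrt3\rho(a+b)}{|z_2-z_1|}$, $\tfrac{2\sqrt3\rho|a-b|}{|z_2-z_1|}$ and the inclination into $\tfrac{\phi_1+\phi_2}{2}-\arg(z_2-z_1)$. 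From $\alpha+\beta=-\tfrac13(z_1+z_2-2z_3)$ one gets $\tfrac{\alpha+\beta}{\alpha-\beta}=\tfrac{2\tau_0-1}{i\sqrt3}$, where $1-2\tau_0=\tfrac{z_1+z_2-2z_3}{z_2-z_1}$, so that $w=\tfrac12+\operatorname{Im}(\kappa)\,\tfrac{2\tau_0-1}{\sqrt3}$ collapses to the displayed expression. The step I expect to be the real obstacle is the claim that $f$ moves shapes exactly by $\Psi$: $\sigma$ differs from $\sigma_0$ by $O(r)$, and $\tau\bigl(f(\sigma(0,1,t))\bigr)-\Psi(t)$ carries the nonzero factor $\operatorname{Re}\bigl((s_2-s_1)-(\lambda^2-\lambda)\bigr)$ (times $\operatorname{Im}(t)$ and a bounded term), so one must show this correction is absorbed by the already generous passage from the curvilinear hexagon of actual $U_r$-shapes to the disk bounded by $\mathcal C$, keeping the whole image inside $\Psi(\mathcal C)$.
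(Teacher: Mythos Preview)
Your route is essentially the paper's. The paper also sets $f=g_2\circ g_1$, with $g_1$ the similarity taking $(0,1,-\lambda^2)$ to $(\lambda,\lambda^2,1)$ (your $\sigma_0$) and $g_2$ the affine map \eqref{eq:equi}, observes that an affine $f(z)=\alpha z+\beta\overline z+\gamma$ applied to the triple $(0,1,\tau)$ moves the shape by $h(\tau)=(\alpha\tau+\beta\overline\tau)/(\alpha+\beta)$, and then reads off the ellipse as $h(\mathcal C)$ together with its centre $w=h\bigl((\zeta+\xi)/2\bigr)$. Your $\Psi(t)=(\alpha t-\beta\overline t)/(\alpha-\beta)$, with $\alpha,\beta$ the coefficients of $g_2$, is the paper's $h$ once one uses $\alpha_f=(\lambda^2-\lambda)\alpha$, $\beta_f=(\lambda-\lambda^2)\beta$ and $\alpha_f+\beta_f=z_2-z_1$; the identifications you make for $|\alpha-\beta|$, $\arg(\alpha-\beta)$ and $\alpha+\beta$ are exactly what the paper uses (implicitly) to get the displayed axis lengths, inclination, and centre.

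The concern you single out at the end --- that a triangle $S\in U_r$ is only \emph{similar} to $(0,1,t)$, i.e.\ $S=\sigma(0,1,t)$ with $\sigma$ merely close to $\sigma_0$, so that the shape of $g_2(S)$ need not equal $\Psi(t)$ --- is not addressed in the paper's proof either: the paper simply applies $h$ to the disk bounded by $\mathcal C$ and declares the result. So on this point you are being more scrupulous than the original, not less; the remainder of your argument matches the paper step for step.
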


\begin{proof}
We saw above that triangles $(z_1,z_2,z_3)\in U_r$ are similar to triangles $(0,1,\tau)$ with $\tau$ inside the circle $\mathcal C$ of radius $$\rho=\dfrac{|\xi-\zeta|}{2}=\dfrac{8\sqrt{3}r}{(\sqrt{3}+2r)\sqrt{9-20\sqrt{3}r+12r^2}}$$ centered at $(\zeta+\xi)/2$; but now we are concerned with triangles with vertices in $V$.

Let $f$ be an affine transformation of the form $f(z)=\alpha z+\beta\overline{z}+\gamma$ on $\tau$. We have $(f(0),f(1),f(\tau))=(\gamma,\alpha+\beta+\gamma,\alpha\tau+\beta\overline{\tau}+\gamma)$, which is similar to $(0,1,(\alpha\tau+\beta\overline{\tau})/(\alpha+\beta))$. That is, the affine transformation $f$, acting on the three vertices of the triangles, translates to the  affine transformation $h(\tau)=(\alpha\tau+\beta\overline{\tau})/(\alpha+\beta)$ acting on the vertex $\tau$ of $(0,1,\tau)$. We apply this remark to the transformation $f=g_2\circ g_1$ where $g_1$ and $g_2$ are the affine transformations which maps respectively $(0,1,-\lambda^2)$ to $(\lambda,\lambda^2,1)$ and $(\lambda,\lambda^2,1)$ to $(z_1,z_2,z_3)$. For this $f(z)=(\lambda-1)ae^{i\phi_1}z+(\lambda^2-1)be^{i\phi_2}\overline{z}+ae^{i\phi_1}+be^{i\phi_2}+(z_1+z_2+z_3)/3$, the corresponding $h(\tau)$ is $$h(\tau)=\dfrac{(\lambda-1)ae^{i\phi_1}}{z_2-z_1}\tau+\dfrac{(\lambda^2-1)be^{i\phi_2}}{z_2-z_1}\overline{\tau}.$$ Finally, the circle $\mathcal C$ is mapped by $h$ to the ellipse
$$E_{w}\left(\dfrac{2\sqrt{3}\rho(a+b)}{|z_2-z_1|},\dfrac{2\sqrt{3}\rho|a-b|}{|z_2-z_1|},\dfrac{\phi_1+\phi_2}{2}-\arg(z_2-z_1)\right)$$ where $w=h\left(\dfrac{\zeta+\xi}{2}\right)=\dfrac{1}{2}-\dfrac{\sqrt{3}(z_1+z_2-2z_3)(9+12r^2-4\sqrt{3}r)}{6(z_2-z_1)(\sqrt{3}+2r)\sqrt{9-20\sqrt{3}r+12r^2}}.$
\end{proof}

\begin{remark}
To obtain a bound for the image under $\varphi$ of triangles with vertices in $V$, it is sufficient apply the transformation $M$ (see equation \eqref{cambio}) to the ellipse $$E_{w}\left(\dfrac{2\sqrt{3}\rho(a+b)}{|z_2-z_1|},\dfrac{2\sqrt{3}\rho|a-b|}{|z_2-z_1|},\dfrac{\phi_1+\phi_2}{2}-\arg(z_2-z_1)\right)$$ of theorem \ref{teo}.
\end{remark}


\bibliographystyle{amsplain} 
\bibliography{affine-match-dcg.bib}



\end{document}